\documentclass[letterpaper, 10 pt, conference]{ieeeconf}

\IEEEoverridecommandlockouts
\usepackage{amsmath,amssymb}
\usepackage{graphicx}
\usepackage{booktabs}
\usepackage{cite}
\usepackage{url}
\usepackage{algorithm}
\usepackage{algorithmic}
\usepackage{balance}

\newtheorem{theorem}{Theorem}
\newtheorem{definition}{Definition}

\title{\LARGE \bf
Self-Correction as Feedback Control: Error Dynamics, Stability Thresholds, and Prompt Interventions in LLMs
}

\author{Aofan Liu$^{1}$ \quad Jingxiang Meng$^{2,*}$%
\thanks{$^{1}$Peking University.}%
\thanks{$^{2}$University of Chicago.}%
\thanks{$^{*}$Corresponding author.}%
}

\begin{document}

\maketitle
\thispagestyle{empty}
\pagestyle{empty}

%%%%%%%%%%%%%%%%%%%%%%%%%%%%%%%%%%%%%%%%%%%%%%%%%%%%%%%%%%%%%%%%%%%%%%%%%%%%%%%%
\begin{abstract}
Iterative self-correction is increasingly deployed in agentic LLM systems, yet whether repeated refinement improves or degrades performance remains inconsistent across models. We recast self-correction as a closed-loop feedback-control problem in which the same model is both controller and plant, and analyze its error dynamics via a two-state Markov model over \{Correct, Incorrect\}, parameterized by the Error Introduction Rate (EIR) and Error Correction Rate (ECR). The model yields a directly measurable stability threshold---iterate only when $\mathrm{ECR}/\mathrm{EIR} > \mathrm{Acc}/(1-\mathrm{Acc})$---in which EIR acts as a stability margin and prompting becomes lightweight controller design. Empirically, across 7 models and 3 datasets (GSM8K, MATH, StrategyQA), a sharp near-zero EIR boundary ($\lesssim 0.5\%$) cleanly separates beneficial from harmful self-correction: only o3-mini ($+3.4$~pp), Claude Opus~4.6 ($+0.6$~pp), and o4-mini ($\pm 0$~pp) stay non-degrading, while GPT-5 and four others lose accuracy. A \emph{verify-first} prompt intervention then provides causal evidence: it drives GPT-4o-mini's EIR from 2\% to 0\% and converts a $-6.2$~pp degradation into $+0.2$~pp (paired McNemar, $p<10^{-4}$), with negligible change on already-sub-threshold models---exactly as the diagnostic predicts. A complementary analysis of adaptive self-consistency (ASC) shows it halts harmful refinement at a $3.8$~pp confidence-elicitation cost, exposing a two-tier capability structure: prompt-level EIR suppression prevents degradation, whereas ECR enhancement---plausibly training-level---is required for genuine gains. Self-correction should thus be treated not as a default behavior but as a control decision governed by measurable error dynamics.
\end{abstract}

%%%%%%%%%%%%%%%%%%%%%%%%%%%%%%%%%%%%%%%%%%%%%%%%%%%%%%%%%%%%%%%%%%%%%%%%%%%%%%%%
\section{INTRODUCTION}

Iterative self-correction---where a language model reviews and revises its own outputs---has emerged as a central mechanism in agentic AI systems. The promise is intuitive: by reflecting on potential errors, models can catch and fix mistakes that arise in initial reasoning. This capability underpins self-refinement loops in autonomous agents~\cite{madaan2023selfrefine,shinn2023reflexion}, multi-agent debate systems~\cite{chen2024magicore}, and reasoning chains with verification steps~\cite{du2022energy}.

However, a growing body of evidence questions the effectiveness of unbounded self-correction. Huang et al.~\cite{huang2024selfcorrect} demonstrate that LLMs cannot reliably self-correct reasoning without external feedback, often degrading performance through overcorrection. A complementary pattern---which we term the \emph{accuracy--correction paradox}---is that models with higher initial accuracy tend to benefit less from, or even be harmed by, self-correction; related critical surveys~\cite{kamoi2024can,stechly2025self} document similar trends. Chen et al.~\cite{chen2024magicore} find that excessive refinement paradoxically degrades performance in multi-agent systems. These findings converge on a critical question: \emph{What governs the transition from beneficial to harmful self-correction, and can we predict and exploit this transition?}

Prior work has approached this question empirically, characterizing saturation points for specific models and tasks. However, a \emph{theoretical} understanding of why self-correction converges---or diverges---and when it should stop is lacking.

\textbf{This work.} We treat iterative self-correction as a closed-loop control problem: each revision applies a feedback action to the previous answer, and the central question is whether the loop is stabilizing or destabilizing. Our contribution is less a new Markov theorem than an \emph{operationalization} of this feedback view into a measurable diagnostic and an actionable intervention. Specifically, we make three contributions:

\textbf{(1) A Markov-chain diagnostic for self-correction.} We model iterative self-correction as a two-state Markov chain on \{Correct, Incorrect\}, parameterized by the Error Introduction Rate (EIR) and Error Correction Rate (ECR). The equilibrium, steady-state, and convergence expressions are standard consequences of this model; our contribution is to turn them into a directly measurable \emph{stop-or-iterate} diagnostic for deployment. Relative to Yang et al.~\cite{yang2025probabilistic}, who characterize \emph{convergence curves} across rounds, we emphasize an engineering question: when should a controller continue applying self-correction, and when should it stop?

\textbf{(2) A near-zero EIR threshold, causally validated.} Across 7 models and 3 datasets we identify a sharp threshold: only models with EIR $\lesssim$ 0.5\% (o3-mini, Claude Opus~4.6, o4-mini) benefit from self-correction; five others degrade, including GPT-5. A verify-first prompt ablation drives GPT-4o-mini's EIR from 2\% to 0\% and turns $-6.2$~pp degradation into $+0.2$~pp (paired McNemar $p < 10^{-4}$). The same prompt produces little change on already-sub-threshold models, as predicted by the diagnostic, suggesting that EIR is the causal control variable rather than a spurious correlate.

\textbf{(3) Analysis of stopping and capability trade-offs.} We analyze ASC as an adaptive stop-rule combining instance-level self-confidence with batch-level $\widehat{\text{EIR}}/\widehat{\text{ECR}}$ monitoring. ASC halts harmful refinement for GPT-4o-mini, but the confidence-elicitation prompt itself costs 3.8~pp of accuracy. We therefore treat ASC not as a headline gain, but as evidence for a two-tier capability view: \emph{EIR suppression} (prompt-level, already achievable) prevents degradation, whereas \emph{ECR enhancement} (likely training-level) is required for actual improvement.

Our framework shifts the study of self-correction from observation to principled control: measure EIR on a calibration set, check whether the equilibrium condition is satisfied, and iterate only if it is. For high-accuracy models, $\pi^*$ can be far below baseline, making unconditional self-correction a systematic form of compute waste.

%%%%%%%%%%%%%%%%%%%%%%%%%%%%%%%%%%%%%%%%%%%%%%%%%%%%%%%%%%%%%%%%%%%%%%%%%%%%%%%%
\section{RELATED WORK}

\subsection{Self-Correction in Large Language Models}

Huang et al.~\cite{huang2024selfcorrect} demonstrate that LLMs cannot self-correct reasoning without external feedback, finding that self-correction often degrades performance when models lack ground-truth signals. Kamoi et al.~\cite{kamoi2024can} and Stechly et al.~\cite{stechly2025self} provide critical surveys showing that strong self-verification is rare in current LLMs, and that higher baseline accuracy can correlate with worse net gains from self-correction---a pattern we refer to as the \emph{accuracy--correction paradox}.

Yang et al.~\cite{yang2025probabilistic} recently develop a probabilistic theory for multi-round self-correction, modeling accuracy evolution as a Markov process and deriving closed-form convergence curves that align well with empirical trajectories across diverse models. Building on this line of analytical work, we focus on the \emph{practical} question of when self-correction helps versus hurts: we identify a near-zero EIR threshold as a diagnostic for beneficial self-correction and validate it causally through a verify-first prompt ablation.

Our work extends these analyses by examining error propagation across \emph{multiple} iterations rather than single-step correction.

\subsection{Iterative Refinement Strategies}

Madaan et al.~\cite{madaan2023selfrefine} introduce Self-Refine, demonstrating improvements across diverse tasks but without systematically quantifying diminishing returns. Shinn et al.~\cite{shinn2023reflexion} propose Reflexion with verbal reinforcement learning through self-reflection, but rely on external feedback. Chen et al.~\cite{chen2024magicore} introduce MAgICoRe, identifying ``excessive refinement'' as a key challenge but without measuring iteration-by-iteration trajectories. Our work complements these approaches by providing fine-grained empirical measurements of what happens at each refinement step.

\subsection{Agentic Reasoning Systems}

Agentic reasoning systems combine planning, tool use, and self-reflection to tackle open-ended problems through continual interaction with an environment. Du et al.~\cite{du2022energy} explore iterative reasoning through energy minimization, showing that adaptive computational budgets improve performance, while Schick et al.~\cite{schick2023toolformer} show that language models can learn to invoke external tools within their reasoning trajectories. Gou et al.~\cite{gou2024critic} further demonstrate that tool-interactive critiquing is substantially more reliable than intrinsic self-critique. Our focus on refinement depth complements this body of work by quantifying when iteration benefits diminish even in the purely intrinsic setting.

\subsection{Error Analysis in Mathematical Reasoning}

Prior work on mathematical reasoning has focused on improving baseline accuracy through better prompting~\cite{wei2022chain}, fine-tuning~\cite{cobbe2021gsm8k}, or tool augmentation~\cite{schick2023toolformer}. However, systematic analysis of error types and their evolution through refinement cycles remains limited. Existing error taxonomies~\cite{hendrycks2021measuring} classify errors at a single point rather than tracking how errors transform through iterative refinement.

Our work addresses this gap by introducing a dynamic error classification framework that tracks error type transitions across iterations. By analyzing EIR and ECR separately, we provide fine-grained insights into which types of problems benefit from refinement and which are resistant to correction. Despite extensive work on self-correction~\cite{huang2024selfcorrect,madaan2023selfrefine,shinn2023reflexion}, no prior study systematically measures iteration-by-iteration accuracy trajectories, error introduction versus correction rates, and error propagation patterns. Our work fills this gap.

%%%%%%%%%%%%%%%%%%%%%%%%%%%%%%%%%%%%%%%%%%%%%%%%%%%%%%%%%%%%%%%%%%%%%%%%%%%%%%%%
\section{METHOD}
\label{sec:method}

\subsection{Problem Formulation}

Consider $N$ reasoning problems $\{q_1, \ldots, q_N\}$ with ground-truth answers $\{a_1^*, \ldots, a_N^*\}$. Iterative self-correction proceeds as:
\begin{align}
    r_i^{(0)} &= \mathcal{M}(q_i) \label{eq:baseline} \\
    r_i^{(k)} &= \mathcal{M}(q_i, r_i^{(k-1)}, p_{\text{refine}}) \quad \text{for } k \geq 1 \label{eq:refine}
\end{align}
where $r_i^{(k)}$ is the response at iteration $k$ and $p_{\text{refine}}$ is the refinement prompt. Define $c_i^{(k)} = \mathbb{1}[\text{extract}(r_i^{(k)}) = a_i^*]$ and $\text{Acc}(k) = \frac{1}{N}\sum_{i=1}^N c_i^{(k)}$.

\begin{definition}[Error Transition Rates]
At iteration $k$, the Error Introduction Rate (EIR) and Error Correction Rate (ECR) are:
\begin{align}
    \text{EIR}(k) &= P(c_i^{(k+1)} = 0 \mid c_i^{(k)} = 1) \\
    \text{ECR}(k) &= P(c_i^{(k+1)} = 1 \mid c_i^{(k)} = 0)
\end{align}
The Net Benefit is $\text{NB}(k) = \text{Acc}(k) - \text{Acc}(k-1)$.
\end{definition}

\subsection{Markov Chain Model}

\begin{figure*}[t]
\centering
\includegraphics[width=\textwidth]{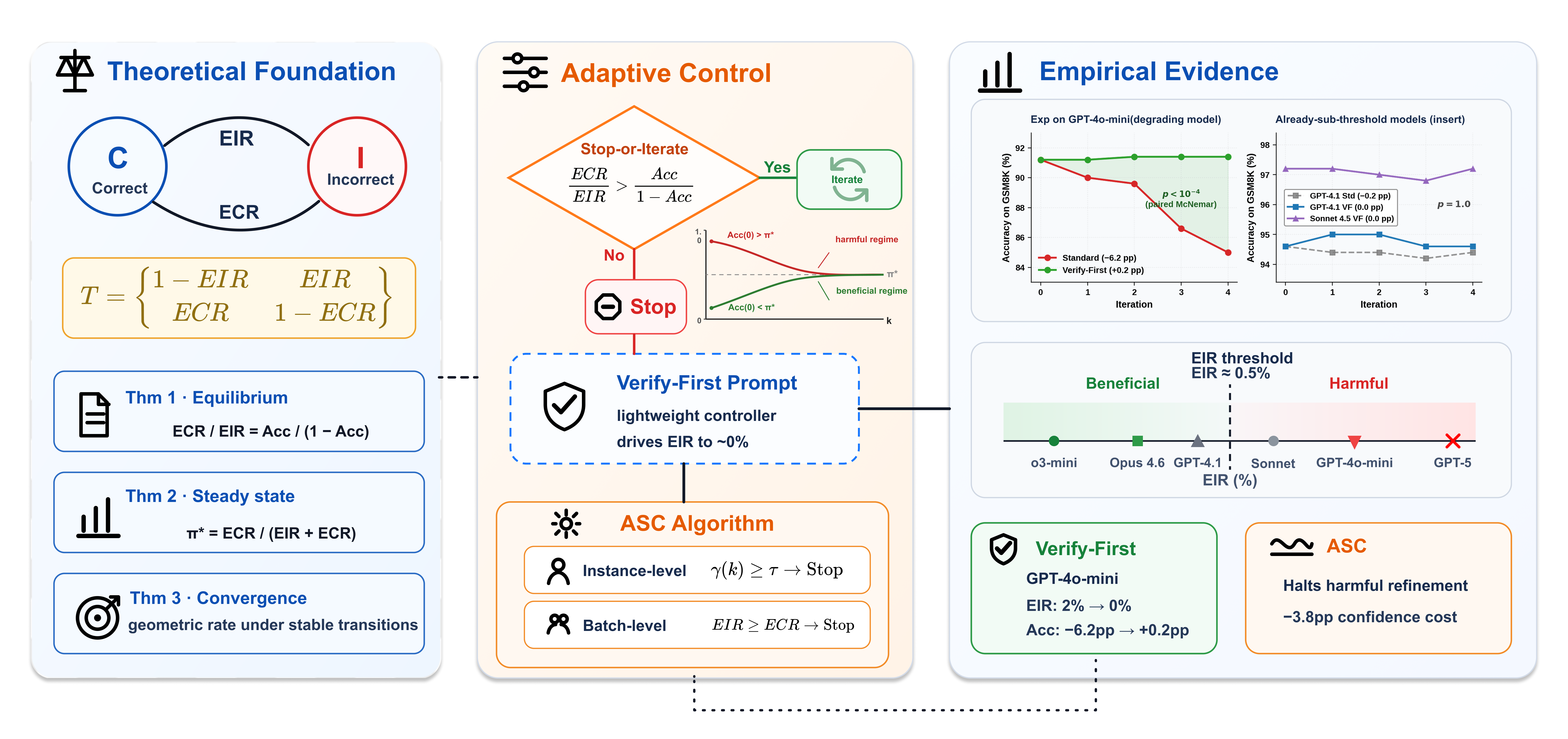}
\caption{Three-layer view of iterative self-correction as a Markov feedback loop. The \emph{theoretical layer} formalises correctness evolution on $\{C,I\}$ with EIR/ECR transitions, yielding equilibrium, steady-state, and convergence expressions. The \emph{control layer} interprets EIR as a stability margin and verify-first prompting as controller design; ASC adds instance-level confidence $\gamma(k)\!\geq\!\tau$ with batch-level $\widehat{\text{EIR}}/\widehat{\text{ECR}}$ monitoring for early stopping. The \emph{empirical layer} evaluates 7 models $\times$ 3 datasets, confirming near-zero EIR ($\lesssim 0.5\%$) as the threshold separating beneficial from harmful self-correction.}
\label{fig:markov}
\end{figure*}

We model each problem's correctness as a two-state Markov chain on $\mathcal{S} = \{\textsc{Correct}, \textsc{Incorrect}\}$, illustrated in Figure~\ref{fig:markov}, with transition matrix:
\begin{equation}
T(k) = \begin{pmatrix} 1 - \text{EIR}(k) & \text{EIR}(k) \\ \text{ECR}(k) & 1 - \text{ECR}(k) \end{pmatrix}
\label{eq:transition}
\end{equation}

\subsection{Theoretical Results}

\begin{theorem}[Equilibrium Condition]
\label{thm:equilibrium}
The net benefit is zero ($\text{NB}(k+1) = 0$) if and only if:
\begin{equation}
\frac{\text{ECR}(k)}{\text{EIR}(k)} = \frac{\text{Acc}(k)}{1 - \text{Acc}(k)}
\label{eq:equilibrium}
\end{equation}
\end{theorem}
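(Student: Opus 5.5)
The plan is to derive the one-step accuracy recursion from the transition matrix~\eqref{eq:transition} and then read off when the increment vanishes. Note the indexing: by the Definition, $\text{NB}(k+1) = \text{Acc}(k+1) - \text{Acc}(k)$. The transition from iteration $k$ to $k+1$ is governed by $\text{EIR}(k)$ and $\text{ECR}(k)$, so the statement concerns a single application of $T(k)$.

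\textbf{Step 1 (law of total probability).} Interpret $\text{Acc}(k)$ as $P(c_i^{(k)}=1)$, the empirical fraction over problems. Condition on the correctness state at iteration $k$:
\begin{equation*}
\text{Acc}(k+1) = \text{Acc}(k)\bigl(1-\text{EIR}(k)\bigr) + \bigl(1-\text{Acc}(k)\bigr)\text{ECR}(k).
\end{equation*}
This is simply the first component of the row vector $(\text{Acc}(k),\, 1-\text{Acc}(k))\,T(k)$. It also holds exactly for empirical frequencies if EIR and ECR are defined as the empirical conditional frequencies over the $N$ problems. In that case no modelling assumption beyond the definitions is needed.

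\textbf{Step 2 (net benefit in flow form).} Subtract $\text{Acc}(k)$ to obtain
\begin{equation*}
\text{NB}(k+1) = \bigl(1-\text{Acc}(k)\bigr)\text{ECR}(k) - \text{Acc}(k)\,\text{EIR}(k).
\end{equation*}
This says the net benefit is the correction inflow minus the error-introduction outflow. Hence $\text{NB}(k+1)=0$ if and only if $(1-\text{Acc}(k))\,\text{ECR}(k) = \text{Acc}(k)\,\text{EIR}(k)$.

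\textbf{Step 3 (rearrangement).} Divide both sides by $\text{EIR}(k)\,(1-\text{Acc}(k))$ to reach~\eqref{eq:equilibrium}. Both directions of the equivalence follow because the division is reversible whenever the divisor is nonzero.

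The only real obstacle is the degenerate cases in which the ratio form is ill-defined, namely $\text{EIR}(k)=0$ or $\text{Acc}(k)\in\{0,1\}$. In those cases one conditional rate may not even be defined, since it conditions on an empty set. I would state the theorem under the standing assumption $0<\text{Acc}(k)<1$ and $\text{EIR}(k)>0$, and treat the cross-multiplied form from Step 2 as the general statement. With that form, the boundary cases can be checked directly. For example, $\text{EIR}(k)=0$ gives $\text{NB}(k+1)=0$ if and only if $\text{ECR}(k)=0$ when $\text{Acc}(k)<1$, which is consistent with reading the ratio as $+\infty$ versus a finite right-hand side. As a corollary of the same computation, $\text{NB}(k+1)>0$ if and only if $\text{ECR}(k)/\text{EIR}(k) > \text{Acc}(k)/(1-\text{Acc}(k))$, which is the stop-or-iterate rule quoted in the abstract.
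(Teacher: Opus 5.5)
Your proof is correct and is the same argument as the paper's sketch: obtain $\text{Acc}(k+1)=\text{Acc}(k)(1-\text{EIR}(k))+(1-\text{Acc}(k))\text{ECR}(k)$ from $T(k)$, set $\text{NB}(k+1)=0$, and rearrange. Your additions are useful refinements the paper omits: the recursion holds exactly for empirical frequencies, and you handle $\text{EIR}(k)=0$ and $\text{Acc}(k)\in\{0,1\}$ explicitly via the cross-multiplied form. One small correction: the subcase $\text{EIR}(k)=\text{ECR}(k)=0$ gives a $0/0$ ratio rather than $+\infty$, which is exactly why the cross-multiplied form should be taken as the general statement.
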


This provides an interpretable condition: refinement stops improving when the correction rate (weighted by error proportion) exactly balances the introduction rate (weighted by correct proportion).

\begin{theorem}[Steady-State Accuracy]
\label{thm:steadystate}
If $\text{EIR}(k) \to \text{EIR}^*$ and $\text{ECR}(k) \to \text{ECR}^*$, the steady-state accuracy is:
\begin{equation}
\pi^* = \frac{\text{ECR}^*}{\text{EIR}^* + \text{ECR}^*}
\label{eq:steadystate}
\end{equation}
\end{theorem}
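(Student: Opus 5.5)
The plan is to reduce the time-varying chain to a scalar affine recursion for accuracy and then pass to the limit. Write $a_k = \text{Acc}(k)$, $e_k = \text{EIR}(k)$, $c_k = \text{ECR}(k)$. By the law of total probability over the two states of the transition matrix \eqref{eq:transition},
\begin{equation*}
a_{k+1} = a_k(1-e_k) + (1-a_k)c_k = c_k + (1-e_k-c_k)\,a_k ,
\end{equation*}
which is the same one-step identity used for Theorem~\ref{thm:equilibrium}. Let $s_k = e_k + c_k$ and $\lambda_k = 1 - s_k$. Whenever $s_k > 0$, set $\pi_k = c_k/s_k$, the fixed point of the $k$-th affine map. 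The recursion then becomes
\begin{equation*}
a_{k+1} - \pi_k = \lambda_k\,(a_k - \pi_k).
\end{equation*}
As a sanity check, Theorem~\ref{thm:equilibrium} says exactly that $a_k = \pi_k$ is the zero-net-benefit point.

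\textbf{Non-degeneracy.} I will assume $s^* := \text{EIR}^* + \text{ECR}^* > 0$, as is implicit for \eqref{eq:steadystate} to be defined. I will also assume $s^* < 2$, i.e.\ we exclude the deterministic flip-flop $\text{EIR}^* = \text{ECR}^* = 1$. Under these assumptions $|\lambda^*| = |1 - s^*| < 1$. By convergence of the rates, for all large $k$ we have $|\lambda_k| \le \rho < 1$ for some fixed $\rho$, and $\pi_k \to \pi^*$.

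\textbf{Main step.} The obstacle is that the fixed point $\pi_k$ moves with $k$, so this is not a pure geometric contraction. Set $d_k = a_k - \pi^*$. Then
\begin{equation*}
d_{k+1} = \lambda_k d_k + \varepsilon_k, \qquad \varepsilon_k = (1-\lambda_k)(\pi_k - \pi^*) \to 0 .
\end{equation*}
Iterating from a large index $K$ gives
\begin{equation*}
|d_{k}| \le \rho^{k-K}|d_K| + \sup_{j\ge K}|\varepsilon_j| \sum_{i\ge 0}\rho^i ,
\end{equation*}
and $|d_K| \le 1$ since accuracies lie in $[0,1]$. Taking $\limsup_{k\to\infty}$ and then letting $K\to\infty$ gives $\limsup |d_k| \le \sup_{j\ge K}|\varepsilon_j|/(1-\rho) \to 0$. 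This is the standard argument that a perturbed stable linear recursion inherits the limit of its forcing. It yields $\lim_k \text{Acc}(k) = \pi^* = \text{ECR}^*/(\text{EIR}^*+\text{ECR}^*)$.

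\textbf{Remarks to include.}
\begin{itemize}
\item In the stationary case $e_k \equiv \text{EIR}^*$, $c_k \equiv \text{ECR}^*$, the same computation gives the exact closed form $\text{Acc}(k) = \pi^* + (1-s^*)^k(\text{Acc}(0)-\pi^*)$, with geometric rate $|1-\text{EIR}^*-\text{ECR}^*|$.
\item $\pi^*$ is the unique stationary distribution, obtained from the balance equation $\pi\,\text{EIR}^* = (1-\pi)\,\text{ECR}^*$. This links the result back to Theorem~\ref{thm:equilibrium}.
\item In the excluded case $s^* = 0$ accuracy is frozen at its initial value. In the case $s^* = 2$ accuracy oscillates, so \eqref{eq:steadystate} holds only as a Ces\`aro average.
\end{itemize}
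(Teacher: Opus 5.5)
Your proof is correct, and it takes a different and more complete route than the paper. The paper's argument is one line: the result ``follows from the stationary distribution of the limiting transition matrix.'' That amounts to solving $\pi T^* = \pi$ for $T^* = \lim_k T(k)$, which is your balance-equation remark $\pi\,\text{EIR}^* = (1-\pi)\,\text{ECR}^*$. This computation identifies the fixed point. It does not by itself show that $\text{Acc}(k)$, which evolves under the \emph{time-varying} matrices $T(k)$, actually converges to that fixed point. That step needs two things the sketch leaves implicit: the limiting chain must be contracting, $|1-\text{EIR}^*-\text{ECR}^*|<1$, and a perturbation argument must transfer this to the inhomogeneous chain.

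Your reduction to the scalar recursion $d_{k+1} = \lambda_k d_k + \varepsilon_k$ supplies exactly that step. Eventually $|\lambda_k|\le\rho<1$ and $\varepsilon_k\to 0$, and the perturbed-contraction bound gives $d_k \to 0$. The argument also surfaces hypotheses the paper's statement omits. The formula is undefined when $\text{EIR}^*+\text{ECR}^*=0$. When $\text{EIR}^*=\text{ECR}^*=1$, accuracy need not converge, so $\pi^*$ is a steady state only in the stationary-distribution sense. Your condition $0<\text{EIR}^*+\text{ECR}^*<2$ also covers the reducible case $\text{EIR}^*=0$, which is the regime the paper reports for o3-mini; there $\pi^*=1$. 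In addition, your stationary-case computation recovers the closed form of Theorem~\ref{thm:convergence}, so one argument handles both results.

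The paper's framing has its own advantages. It is brief, and it extends verbatim to chains with more than two states, such as the step-graded refinement of $\{C,I\}$ suggested in the paper's limitations. Your scalar trick depends on the two-state structure, where $\text{Acc}(k)$ determines the whole distribution. For a multi-state chain you would replace it with a contraction-coefficient (Dobrushin-type) argument on distributions.

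A minor tidying: you can write $\varepsilon_k = \text{ECR}(k) - (\text{EIR}(k)+\text{ECR}(k))\,\pi^*$. This avoids dividing by $s_k$ and makes $\varepsilon_k\to 0$ immediate.
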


\begin{theorem}[Convergence Rate]
\label{thm:convergence}
Under stationary rates, convergence is geometric:
\begin{equation}
|\text{Acc}(k) - \pi^*| = |1 - \text{EIR}^* - \text{ECR}^*|^k \cdot |\text{Acc}(0) - \pi^*|
\end{equation}
\end{theorem}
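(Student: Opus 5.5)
The plan is to reduce the accuracy evolution to a one-dimensional affine recursion and then solve it explicitly. Throughout, read ``stationary rates'' as $\text{EIR}(k)=\text{EIR}^*$ and $\text{ECR}(k)=\text{ECR}^*$ for all $k\ge 0$, and write $e=\text{EIR}^*$, $c=\text{ECR}^*$.

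\textbf{Step 1: a one-step recursion for accuracy.} Apply the law of total probability to the transition matrix $T(k)$ in~\eqref{eq:transition}, interpreting $\text{Acc}(k)$ as the probability that a randomly drawn problem is in state \textsc{Correct} at iteration $k$. This gives
\begin{equation*}
\text{Acc}(k+1) = (1-e)\,\text{Acc}(k) + c\,(1-\text{Acc}(k)) = (1-e-c)\,\text{Acc}(k) + c .
\end{equation*}
This is the same computation that underlies Theorem~\ref{thm:equilibrium}: setting $\text{Acc}(k+1)=\text{Acc}(k)$ recovers $c(1-\text{Acc})=e\,\text{Acc}$. So the step can be taken as already implicit in the earlier result.

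\textbf{Step 2: identify the fixed point.} Assume $e+c>0$. Then $\pi^*=c/(e+c)$ from Theorem~\ref{thm:steadystate} is the unique solution of $\pi=(1-e-c)\pi+c$. One checks directly that $(e+c)\pi^*=c$.

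\textbf{Step 3: subtract and iterate.} Subtract the fixed-point equation from the recursion. The constant $c$ cancels, leaving
\begin{equation*}
\text{Acc}(k+1)-\pi^*=(1-e-c)\bigl(\text{Acc}(k)-\pi^*\bigr).
\end{equation*}
Induction on $k$ then gives $\text{Acc}(k)-\pi^* = (1-e-c)^k(\text{Acc}(0)-\pi^*)$. Taking absolute values yields the stated identity, which holds as an exact equality and not merely as a bound.

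\textbf{Step 4: interpret the rate.} Since $e,c\in[0,1]$, we have $|1-e-c|\le 1$. The convergence is strictly geometric exactly when $0<e+c<2$. The boundary cases are $e=c=0$, where the chain is frozen and $\pi^*$ is undefined, and $e=c=1$, where the chain oscillates with no convergence. Both should be flagged as excluded.

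\textbf{Main obstacle.} The algebra is routine; the substantive difficulty is the modeling assumption in Step 1. $\text{Acc}(k)$ is an empirical average over $N$ problems, and the identity holds exactly only if every problem shares the same transition probabilities, or if one reads $\text{Acc}(k)$ as an expectation. I would state explicitly that the theorem is about the expected accuracy under a homogeneous chain. For finite $N$ with empirical rates, I would note that the recursion holds only as an identity in expectation.
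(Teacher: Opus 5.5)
Your proof is correct, and it is a more elementary version of the paper's argument. The paper's sketch uses the spectral decomposition of the constant matrix $T$. Its eigenvalues are $1$, with left eigenvector $(\pi^*,1-\pi^*)$, and $\lambda_2=1-\text{EIR}^*-\text{ECR}^*$, with left eigenvector $(1,-1)$. The deviation of $(\text{Acc}(k),1-\text{Acc}(k))$ from stationarity is always a multiple of $(1,-1)$, so each step multiplies it by $\lambda_2$.

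You skip the eigen-analysis: you project onto the single coordinate $\text{Acc}(k)$, reuse the affine recursion from the proof of Theorem~\ref{thm:equilibrium}, and subtract the fixed point. For two states this is the same computation, but your version has three advantages:
\begin{itemize}
\item it needs no diagonalization;
\item it shows the result is an exact equality, not a bound;
\item it isolates the degenerate case that must be excluded, namely $\text{EIR}^*+\text{ECR}^*=0$, where $\pi^*$ is undefined.
\end{itemize}
The case $\text{EIR}^*=\text{ECR}^*=1$ need not be excluded from the identity: it still holds with $\pi^*=1/2$, it just gives no convergence. The spectral view is what extends beyond two states, for example to the step-graded states in the Limitations. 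There no scalar reduction exists, and one gets geometric decay governed by the second-largest eigenvalue modulus, typically only as a bound.

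One correction to your closing caveat. The one-step recursion is not merely an identity in expectation, and it needs no homogeneity across problems. Read $\text{EIR}(k)$ and $\text{ECR}(k)$ as the fractions of currently correct (respectively incorrect) items that flip. Then $\text{Acc}(k+1)=\text{Acc}(k)(1-\text{EIR}(k))+(1-\text{Acc}(k))\,\text{ECR}(k)$ is an exact counting identity for the empirical accuracy. For example, it reproduces the drop from 91.2\% to 90.0\% for GPT-4o-mini in Table~\ref{tab:eir_ecr}. In the population reading it is just the law of total probability.

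The only substantive hypothesis is that these aggregate rates are constant in $k$. Heterogeneity matters only if you try to derive that from per-problem stationarity. A mixture of fixed per-problem chains generally has drifting aggregate rates.
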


\textit{Proof sketch for Theorem~\ref{thm:equilibrium}.} From Markov dynamics, $\text{Acc}(k{+}1) = \text{Acc}(k)(1 - \text{EIR}(k)) + (1 - \text{Acc}(k))\text{ECR}(k)$. Setting $\text{NB} = \text{Acc}(k{+}1) - \text{Acc}(k) = 0$ and rearranging yields Eq.~\eqref{eq:equilibrium}. Theorem~\ref{thm:steadystate} follows from the stationary distribution of the limiting transition matrix, and Theorem~\ref{thm:convergence} from the spectral decomposition with subdominant eigenvalue $\lambda_2 = 1 - \text{EIR}^* - \text{ECR}^*$.

\textbf{Implications.} Theorem~\ref{thm:convergence} shows that when both rates are low (typical for strong models on moderate tasks), convergence is slow ($\rho \approx 1$). When rates are high, convergence is fast but the steady-state may be poor. Most critically, these results reveal a fundamental asymmetry: for high-accuracy models, $\pi^*$ can be far \emph{below} baseline accuracy, making all self-correction harmful.

\subsection{Adaptive Self-Correction (ASC)}

Motivated by the theoretical analysis, we propose ASC (Algorithm~\ref{alg:asc}), which uses two complementary stopping criteria:

\begin{algorithm}[t]
\caption{Adaptive Self-Correction (ASC)}
\label{alg:asc}
\begin{algorithmic}[1]
\REQUIRE Problem $q$, model $\mathcal{M}$, max iterations $K$, threshold $\tau$
\ENSURE Final response $r$, iterations used $k$
\STATE $r^{(0)} \leftarrow \mathcal{M}(q)$; $\gamma^{(0)} \leftarrow \text{ConfScore}(\mathcal{M}, q, r^{(0)})$
\IF{$\gamma^{(0)} \geq \tau$}
    \RETURN $r^{(0)}$, $0$
\ENDIF
\FOR{$k = 1$ to $K$}
    \STATE $r^{(k)} \leftarrow \mathcal{M}(q, r^{(k-1)}, p_{\text{refine}})$
    \STATE $\gamma^{(k)} \leftarrow \text{ConfScore}(\mathcal{M}, q, r^{(k)})$
    \IF{$\gamma^{(k)} \geq \tau$}
        \RETURN $r^{(k)}$, $k$ \hfill\textit{// Confidence criterion}
    \ENDIF
    \IF{$\widehat{\text{EIR}} \geq \widehat{\text{ECR}}$}
        \RETURN $r^{(k-1)}$, $k-1$ \hfill\textit{// Equilibrium criterion}
    \ENDIF
\ENDFOR
\RETURN $r^{(K)}$, $K$
\end{algorithmic}
\end{algorithm}

ASC operates at two levels: (1) \textbf{instance-level confidence}: if self-assessed confidence $\gamma^{(k)} \geq \tau$ (default $\tau = 8$ on 1--10 scale), refinement stops; (2) \textbf{batch-level equilibrium}: if running $\widehat{\text{EIR}} \geq \widehat{\text{ECR}}$, further iterations would decrease accuracy by Theorem~\ref{thm:equilibrium}.

\subsection{Experimental Design}

\textbf{Models.} Seven models spanning four capability tiers: GPT-4o-mini (fast), GPT-4.1 and Claude Sonnet~4 (mid), GPT-5 and Claude Opus~4.6 (frontier), and o3-mini plus o4-mini (reasoning/RLVR). Detailed per-iteration trajectories are reported for six core models on GSM8K (Table~\ref{tab:gsm8k_iter}); o4-mini is evaluated as an additional RLVR validation run in Section~\ref{sec:rlvr}.

\textbf{Datasets.} GSM8K~\cite{cobbe2021gsm8k} (500 problems), MATH~\cite{hendrycks2021math} (400 problems), StrategyQA~\cite{geva2021strategyqa} (200 problems).

\textbf{Baselines.} Generic refinement (4 iterations), Self-Refine~\cite{madaan2023selfrefine}, Reflexion~\cite{shinn2023reflexion}, Self-Consistency~\cite{wang2023selfconsistency}.

%%%%%%%%%%%%%%%%%%%%%%%%%%%%%%%%%%%%%%%%%%%%%%%%%%%%%%%%%%%%%%%%%%%%%%%%%%%%%%%%
\section{EXPERIMENTS}
\label{sec:experiments}

\subsection{Baseline Accuracy}

Table~\ref{tab:baseline} reports baseline accuracies. Opus~4.6 leads on all three benchmarks; MATH is consistently 10--14~pp lower than GSM8K. The one outlier is o3-mini on StrategyQA ($47.0\%$), where the model frequently returns meta-commentary that our exact-match extractor scores as incorrect; this does not affect the GSM8K EIR/ECR analysis below.

\begin{table}[t]
\centering
\small
\caption{Baseline accuracy (\%) at Iteration~0; o4-mini is reported separately in Section~\ref{sec:rlvr}.}
\label{tab:baseline}
\setlength{\tabcolsep}{3pt}
\begin{tabular}{llccc}
\toprule
Tier & Model & GSM8K & MATH & StrQA \\
\midrule
Fast      & GPT-4o-mini  & 91.2 & 72.8 & 75.5 \\
Mid       & GPT-4.1      & 94.6 & 79.8 & 82.5 \\
          & Cl.~Sonnet~4 & 96.8 & 78.2 & 87.5 \\
Frontier  & GPT-5        & 96.2 & 85.8 & 84.5 \\
          & Cl.~Opus~4.6 & \textbf{97.6} & \textbf{86.0} & \textbf{89.5} \\
Reasoning & o3-mini      & 93.2 & 75.8 & 47.0 \\
\bottomrule
\end{tabular}
\end{table}

\subsection{Accuracy Trajectories}

Table~\ref{tab:gsm8k_iter} reveals five distinct convergence patterns on GSM8K (500 problems each).

\begin{table}[t]
\centering
\small
\caption{Accuracy (\%) across refinement iterations on GSM8K.}
\label{tab:gsm8k_iter}
\setlength{\tabcolsep}{3.5pt}
\begin{tabular}{lcccccc}
\toprule
Model & It.~0 & It.~1 & It.~2 & It.~3 & It.~4 & $\Delta$ \\
\midrule
GPT-4o-mini     & 91.2 & 90.0 & 89.6 & 86.6 & 85.0 & $-6.2$ \\
GPT-4.1         & 94.6 & 94.4 & 94.4 & 94.2 & 94.4 & $-0.2$ \\
Cl.~Sonnet~4    & 96.8 & 96.2 & 96.2 & 95.6 & 95.6 & $-1.2$ \\
GPT-5           & 96.2 & 94.8 & 94.4 & 94.6 & 94.4 & $-1.8$ \\
Cl.~Opus~4.6    & 97.6 & 98.0 & \textbf{98.2} & 98.0 & \textbf{98.2} & $\mathbf{+0.6}$ \\
o3-mini         & 93.2 & 96.2 & 96.6 & 96.6 & 96.6 & $\mathbf{+3.4}$ \\
\bottomrule
\end{tabular}
\end{table}

Five modes emerge:

\textbf{(1) Monotonic degradation} (GPT-4o-mini: 91.2\%$\to$85.0\%, $-6.2$~pp). EIR escalates from 1.3\% to 3.8\% across iterations while ECR remains low, leading to compounding errors.

\textbf{(2) Absorbing lock} (GPT-4.1: $\pm$0.2~pp). After one iteration, both EIR and ECR drop to zero---no answers change. The model reaches a near-fixed point at 94.4\%.

\textbf{(3) Stepwise decline} (Claude Sonnet~4: $-1.2$~pp). Alternating ``slip'' iterations ($-0.6$~pp) and stable iterations produce a gradual decline with dynamic equilibrium.

\textbf{(4) Oscillating near-lock} (GPT-5: $-1.8$~pp). Despite frontier capability (96.2\% baseline), GPT-5 declines $-1.4$~pp at Iteration~1, then oscillates around 94.4\%. Its initial EIR (1.9\%) is the highest among mid/frontier models.

\textbf{(5) Beneficial convergence} (o3-mini: $+3.4$~pp; Opus~4.6: $+0.6$~pp). Both improve and stabilize above baseline. o3-mini maintains \textbf{EIR = 0\%} across all 4 iterations; Opus~4.6 achieves EIR $\approx$ 0.2\% with ECR/EIR $\approx$ 125, far exceeding the equilibrium requirement of 40.7.

The GPT-5 vs.\ Opus~4.6 contrast is instructive: despite similar baseline accuracy (96.2\% vs.\ 97.6\%), a 10$\times$ difference in EIR (1.9\% vs.\ 0.2\%) flips the outcome from degradation to improvement. This demonstrates that the EIR threshold for beneficial self-correction is sharp.

\subsection{EIR/ECR Dynamics}

Table~\ref{tab:eir_ecr} presents per-iteration error transition dynamics for all six models.

\begin{table}[t]
\centering
\scriptsize
\caption{EIR and ECR (\%) on GSM8K (500 problems).}
\label{tab:eir_ecr}
\setlength{\tabcolsep}{3pt}
\begin{tabular}{lcccccccc}
\toprule
 & \multicolumn{2}{c}{0$\to$1} & \multicolumn{2}{c}{1$\to$2} & \multicolumn{2}{c}{2$\to$3} & \multicolumn{2}{c}{3$\to$4} \\
\cmidrule(lr){2-3} \cmidrule(lr){4-5} \cmidrule(lr){6-7} \cmidrule(lr){8-9}
 & EIR & ECR & EIR & ECR & EIR & ECR & EIR & ECR \\
\midrule
GPT-4o-mini & 1.3 & 0.0 & 0.9 & 4.0 & 3.8 & 3.8 & 2.1 & 1.5 \\
GPT-4.1     & 0.4 & 3.7 & 0.0 & 0.0 & 0.2 & 0.0 & 0.0 & 3.4 \\
Cl.~Son.~4  & 0.8 & 6.2 & 0.2 & 5.3 & 0.8 & 5.3 & 0.4 & 9.1 \\
GPT-5       & 1.9 & 10.5 & 0.8 & 7.7 & 0.0 & 3.6 & 0.4 & 3.7 \\
Cl.~Op.~4.6 & 0.2 & \textbf{25.0} & 0.2 & \textbf{20.0} & 0.4 & 11.1 & 0.2 & \textbf{20.0} \\
o3-mini     & \textbf{0.0} & 44.1 & \textbf{0.0} & 10.5 & \textbf{0.0} & 0.0 & \textbf{0.0} & 0.0 \\
\bottomrule
\end{tabular}
\end{table}

Key findings: (1)~o3-mini and Opus~4.6 achieve near-zero EIR, acting as a ``correctness guard''; (2)~beneficial models achieve high ECR (o3-mini: 44.1\%, Opus~4.6: 25.0\% at Iteration~1); (3)~EIR \emph{increases} for weaker models (GPT-4o-mini: 1.3\%$\to$3.8\%), indicating compounding errors; (4)~instruction-tuned models fail the equilibrium condition---for GPT-4o-mini, $\text{ECR}/\text{EIR} \approx 1.0$ vs.\ the required 8.6; (5)~for o3-mini with EIR = 0, $\text{ECR}/\text{EIR} \to \infty$, trivially satisfying Theorem~\ref{thm:equilibrium}. Findings (4)--(5) directly support H2.

\subsection{Baseline Comparisons}

\textbf{Self-Refine.} On 750 GSM8K problems with GPT-4o-mini, Self-Refine's Iteration~0 accuracy (85.7\%) is substantially lower than generic refinement (91.2\%), likely because Self-Refine's prompt format interferes with initial problem-solving. Both methods degrade: $-4.6$~pp for Generic, $-3.6$~pp for Self-Refine. This supports H5: structured self-feedback does not overcome EIR/ECR imbalance.

\textbf{Compute-Equivalent Comparison.} With 3 API calls per problem (Table~\ref{tab:compute}), Self-Consistency~\cite{wang2023selfconsistency} achieves 93.4\%, surpassing baseline by 2.2~pp and outperforming 3-iteration refinement by 6.8~pp.

\begin{table}[t]
\centering
\small
\caption{Compute-equivalent comparison (3~API calls) on GSM8K with GPT-4o-mini. $\Delta$ is relative to the single-shot baseline (91.2\%).}
\label{tab:compute}
\begin{tabular}{lcc}
\toprule
Method & Acc.\ (\%) & $\Delta$ \\
\midrule
Baseline (single shot)    & 91.2 & --- \\
3-iter Generic Refine     & 86.6 & $-4.6$ \\
Self-Refine (3 iters)     & 82.1 & $-9.1$ \\
\textbf{Self-Consistency} & \textbf{93.4} & $\mathbf{+2.2}$ \\
\bottomrule
\end{tabular}
\end{table}

Self-Consistency generates \emph{independent} samples, breaking the sequential dependency chain. With $k=3$ independent samples at $p=0.912$, the theoretical majority-vote accuracy $p^3 + 3p^2(1-p) \approx 97.9\%$; the observed 93.4\% is lower due to correlated errors.

\subsection{Verify-First Prompt Ablation}

Can we \emph{induce} near-zero EIR via prompting? We test a verify-first prompt on GPT-4o-mini: ``Before making any changes, first verify whether your previous answer is correct by re-solving independently. Only change your answer if you find a concrete, specific error.'' We replicate on GPT-4.1 and Claude Sonnet~4.5 (a point refresh of Sonnet~4)---two models whose baseline EIR is already $\lesssim 0.5\%$.

\begin{table}[t]
\centering
\small
\caption{Verify-first ablation on GSM8K (500 problems): accuracy (\%) per iteration; EIR rows (\%) for the degrading case.}
\label{tab:verifyfirst}
\setlength{\tabcolsep}{3pt}
\begin{tabular}{llcccccc}
\toprule
Model & Prompt & It.~0 & It.~1 & It.~2 & It.~3 & It.~4 & $\Delta$ \\
\midrule
GPT-4o-mini & Standard     & 91.2 & 90.0 & 89.6 & 86.6 & 85.0 & $-6.2$ \\
            & Verify-First & 91.2 & 91.2 & 91.4 & 91.4 & \textbf{91.4} & $\mathbf{+0.2}$ \\
GPT-4.1     & Standard     & 94.6 & 94.4 & 94.4 & 94.2 & 94.4 & $-0.2$ \\
            & Verify-First & 94.6 & 95.0 & 95.0 & 94.6 & 94.6 & $0.0$ \\
Sonnet~4.5 & Verify-First & 97.2 & 97.2 & 97.0 & 96.8 & 97.2 & $0.0$ \\
\midrule
GPT-4o-mini & Std.\ EIR    & --- & 1.3 & 0.9 & 3.8 & 2.1 & \\
            & VF EIR       & --- & \textbf{0.0} & \textbf{0.0} & \textbf{0.0} & \textbf{0.0} & \\
\bottomrule
\end{tabular}
\end{table}

On GPT-4o-mini, verify-first achieves \textbf{EIR = 0\%} across all 4 iterations, converting monotonic degradation ($-6.2$~pp) to slight improvement ($+0.2$~pp); the paired McNemar test on iter-4 outcomes gives $p < 10^{-4}$ vs.\ standard refinement, with $\Delta$~Acc $= +6.4$~pp (paired-bootstrap 95\% CI $[+4.2, +8.8]$). On GPT-4.1 and Sonnet~4.5, whose baseline EIR is already below the threshold, verify-first produces no detectable accuracy change (paired McNemar $p = 1.0$ for both)---consistent with the prediction that the intervention is \emph{targeted}: active where baseline EIR is harmful, inert where the threshold is already satisfied. This confirms near-zero EIR is the \emph{sufficient condition} for preventing degradation, achievable through prompt engineering alone. However, it does not achieve the beneficial improvements of o3-mini ($+3.4$~pp), suggesting ECR enhancement requires deeper training-level capabilities.

\subsection{RLVR Validation: o4-mini}
\label{sec:rlvr}

To test generalization, we evaluate o4-mini---a third RLVR model. It exhibits average EIR = 0.2\% with accuracy oscillating within $\pm$0.4~pp of 96.8\%. With three models tested (o3-mini: EIR = 0\%, Opus~4.6: EIR $\approx$ 0.2\%, o4-mini: EIR $\approx$ 0.2\%), the near-zero EIR property appears consistent across all non-degrading models.

\subsection{Response Pattern Analysis}

Table~\ref{tab:response_patterns} reveals a surprising finding: \emph{all} models exhibit high verification phrase ratios (84.8--100\%), including GPT-4o-mini (98.8\%), which degrades the most.

\begin{table}[t]
\centering
\small
\caption{Response patterns on GSM8K (500 $\times$ 4 iterations).}
\label{tab:response_patterns}
\setlength{\tabcolsep}{2.5pt}
\begin{tabular}{lccccc}
\toprule
Model & Ver.\% & Conf.\% & Chg.\% & EIR & Mode \\
\midrule
GPT-4o-mini  & 98.8 & 95.5 & 3.4 & 2.02 & Degrade \\
GPT-4.1      & 99.5 & 99.0 & 0.5 & 0.16 & Absorb \\
Cl.~Son.~4   & 100.0 & 99.1 & 0.9 & 0.57 & Stepwise \\
GPT-5        & 96.2 & 95.2 & 1.2 & 0.78 & Oscillate \\
\midrule
Cl.~Op.~4.6  & 99.9 & 99.2 & 0.7 & 0.26 & Beneficial \\
o3-mini      & 84.8 & 83.8 & 1.2 & 0.00 & Beneficial \\
\bottomrule
\end{tabular}
\end{table}

The distinguishing factor is not \emph{how} models verify but \emph{how often they choose to change answers}---and whether those changes target genuinely incorrect answers. When models change answers on previously correct problems, the change is harmful in nearly 100\% of cases across \emph{all} models. The key mechanism behind near-zero EIR is an \emph{internal ability to accurately assess correctness}, not surface-level verification rhetoric.

\subsection{Statistical Analysis}

Table~\ref{tab:hypothesis} presents hypothesis test results.

\begin{table}[t]
\centering
\small
\caption{Hypothesis tests for GPT-4o-mini on GSM8K ($\alpha = 0.05$).}
\label{tab:hypothesis}
\setlength{\tabcolsep}{3pt}
\begin{tabular}{llccc}
\toprule
Hyp. & Test & Stat. & $p$ & Result \\
\midrule
H1: It.~0$\to$1  & McNemar & 6.00 & 0.014 & Sig.~$\downarrow$ \\
H1: It.~1$\to$2  & McNemar & 0.67 & 0.414 & n.s. \\
H1: It.~2$\to$3  & McNemar & 11.84 & $<$0.001 & Sig.~$\downarrow$ \\
H1: It.~3$\to$4  & McNemar & 6.40 & 0.011 & Sig.~$\downarrow$ \\
H5: Gen.~vs.~SR  & McNemar & 0.50 & 0.480 & n.s. \\
\bottomrule
\end{tabular}
\end{table}

Three of four transitions show significant decreases, providing strong evidence that iterative self-correction is actively harmful for this model. Self-Refine does not significantly outperform generic refinement ($p = 0.48$), confirming H5.

\subsection{Analysis: Why Self-Correction Fails}

\textbf{Pool-size asymmetry.} For GPT-4o-mini on GSM8K, the correctable error pool is small (44/500) while the correct pool susceptible to EIR is large (456/500). Even a small EIR (1.3\%) on the large correct pool produces more errors than a moderate ECR on the small incorrect pool. This instantiates the \emph{accuracy--correction paradox} introduced above: for any model with $\text{Acc} > \pi^*$, self-correction degrades performance.

\textbf{Why o3-mini and Opus~4.6 succeed.} Both break the paradox by achieving near-zero EIR, dramatically reducing pool-size asymmetry. With EIR $\approx$ 0, any nonzero ECR guarantees improvement. We hypothesize both models implement an implicit ``verify-then-edit'' strategy. For o3-mini, this likely emerges from RL training on chain-of-thought reasoning; for Opus~4.6, it may reflect large-scale instruction tuning that develops strong internal consistency checking.

\textbf{Cross-dataset generalization (H3).} GPT-4o-mini's degradation generalizes: on MATH, accuracy declines from 72.8\% to 68.5\% ($-4.3$~pp), confirming the pattern is not dataset-specific.

\textbf{ASC behavior (H4).} ASC halts at Iteration~0 for all 500 GSM8K problems with GPT-4o-mini, correctly identifying that further refinement would hurt. The confidence-elicitation prompt itself, however, costs accuracy (87.4\% vs.\ 91.2\%, $-3.8$~pp), suggesting explicit self-assessment diverts reasoning capacity; a two-stage variant (generate first, refine only uncertain cases) could mitigate this.

\textbf{Two-tier capability model.} Our findings support a two-tier model (Fig.~\ref{fig:twotier}): (1)~\emph{EIR suppression} (achievable via prompt engineering or RL) prevents degradation by stopping the model from changing correct answers; (2)~\emph{ECR enhancement} (likely requiring RL training with verifiable rewards) enables actual improvement by developing the capacity to reliably identify and correct genuine errors. The verify-first prompt achieves tier~1 ($+0.2$~pp) but not tier~2 ($+3.4$~pp for o3-mini), confirming this distinction.

\begin{figure}[t]
\centering
\includegraphics[width=0.85\columnwidth]{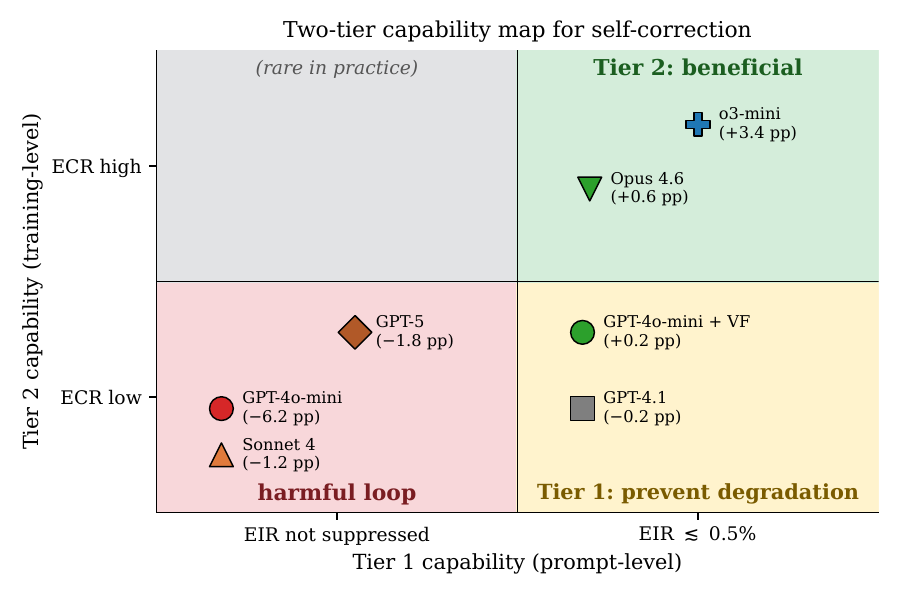}
\caption{Two-tier capability map. Tier~1 (EIR suppression, prompt-level) prevents degradation; Tier~2 (ECR enhancement, training-level) yields actual gains. Verify-First moves GPT-4o-mini out of the harmful quadrant; only RLVR-trained models reach the beneficial quadrant.}
\label{fig:twotier}
\end{figure}

%%%%%%%%%%%%%%%%%%%%%%%%%%%%%%%%%%%%%%%%%%%%%%%%%%%%%%%%%%%%%%%%%%%%%%%%%%%%%%%%
\section{CONCLUSIONS}

This work makes three contributions. \textbf{First}, we cast self-correction as a two-state Markov feedback system and use its standard equilibrium (Theorem~\ref{thm:equilibrium}), steady-state (Theorem~\ref{thm:steadystate}), and convergence (Theorem~\ref{thm:convergence}) properties as an operational diagnostic. For high-accuracy models, $\pi^*$ can be far below baseline, making repeated self-correction harmful.

\textbf{Second}, we reveal a hierarchy of convergence behaviors across seven models. The key predictor is near-zero EIR ($\lesssim$ 0.5\%), not model architecture: o3-mini ($+3.4$~pp, EIR = 0\%), Claude Opus~4.6 ($+0.6$~pp, EIR $\approx$ 0.2\%), and o4-mini ($\pm$0~pp, EIR $\approx$ 0.2\%) all achieve non-degrading self-correction. The GPT-5 vs.\ Opus~4.6 contrast (similar capability, opposite outcomes) shows that a 10$\times$ difference in EIR can flip the closed-loop outcome.

\textbf{Third}, we validate a verify-first prompt that reduces EIR to 0\% through prompt engineering alone, and we use ASC mainly to analyze stopping trade-offs rather than claim a universal performance gain.

Practical implications: (1) for models with $\text{Acc} > \pi^*$, self-correction \emph{reduces} accuracy, so practitioners should estimate EIR on calibration sets; (2) verify-first prompting behaves like lightweight controller design, converting harmful loops into stable ones; (3) Self-Consistency (93.4\%) outperforms iterative refinement (86.6\%) at matched compute.

\subsection{Limitations and Future Directions}

\textbf{Scope.} Our detailed refinement analysis covers seven models on GSM8K with 4 iterations each. Extending to MATH and StrategyQA would further test cross-domain generalization and clarify whether the near-zero EIR threshold remains stable across answer formats and verifier difficulty.

\textbf{Non-stationary rates.} The Markov chain model assumes rates approach stationarity. We observe non-stationarity (EIR increases from 1.3\% to 3.8\%), suggesting time-varying models may be more appropriate. A natural next step is a control-theoretic formulation with iteration-dependent transition dynamics and adaptive stopping thresholds.

\textbf{External feedback.} We evaluate \emph{intrinsic} self-correction (no external oracle). Huang et al.~\cite{huang2024selfcorrect} show that external feedback can make self-correction effective. Incorporating tool outputs, retrieval results, or verifier models could shift the EIR/ECR balance favorably. In control terms, such signals may provide the exogenous correction needed to move the loop from self-referential drift toward error attenuation.

\textbf{Domain generality.} Our evaluation focuses on reasoning with clear correctness criteria; extending to open-ended generation would require redefining EIR/ECR with graded or preference-based notions of correctness. Process-supervised verifiers~\cite{lightman2024verify} suggest one path: a step-graded state in place of $\{C,I\}$ would make harmful self-correction detectable at sub-solution granularity.

\textbf{Compute trade-offs.} The Self-Consistency vs.\ refinement comparison applies at our fixed temperature; recent test-time compute work~\cite{snell2024scaling} indicates the optimal allocation depends on difficulty, motivating a budget-indexed EIR$(B)$/ECR$(B)$ frontier.

\section*{Reproducibility}
We provide anonymous supplementary material with per-iteration trajectories, change logs, EIR/ECR estimates, verify-first outputs, and the ASC reference implementation. All table values and hypothesis tests can be reproduced from the supplied CSVs with a single driver script.

%%%%%%%%%%%%%%%%%%%%%%%%%%%%%%%%%%%%%%%%%%%%%%%%%%%%%%%%%%%%%%%%%%%%%%%%%%%%%%%%

\enlargethispage{\baselineskip}
\balance

\end{document}